\documentclass[11pt]{article}

\usepackage[margin=1in]{geometry}
\usepackage{times}

\usepackage{amsmath,amsfonts,bm}

\def\eqref#1{equation~\ref{#1}}

\def\1{\bm{1}}

\def\vtheta{{\bm{\theta}}}

\def\vg{{\bm{g}}}

\def\vm{{\bm{m}}}

\def\vr{{\bm{r}}}

\def\vx{{\bm{x}}}
\def\vy{{\bm{y}}}

\def\mA{{\bm{A}}}

\def\mJ{{\bm{J}}}

\DeclareMathAlphabet{\mathsfit}{\encodingdefault}{\sfdefault}{m}{sl}
\SetMathAlphabet{\mathsfit}{bold}{\encodingdefault}{\sfdefault}{bx}{n}

\usepackage{amsmath}
\usepackage{amssymb}
\usepackage{amsthm}
\usepackage{booktabs}
\usepackage{graphicx}
\usepackage{multirow}
\usepackage{xcolor}
\usepackage[authoryear,round]{natbib}
\setcitestyle{authoryear,round,citesep={;},aysep={,},yysep={;}}
\usepackage{hyperref}
\usepackage{url}
\usepackage{longtable}
\usepackage{array}
\usepackage{algorithm}
\usepackage{algorithmic}

\newtheorem{theorem}{Theorem}[section]

\title{Loss-Guided Pretraining Data Selection for Time-Series Foundation Models}

\author{
Yike Li \qquad Shaoxu Song\thanks{Corresponding author. Homepage: \url{https://sxsong.github.io/}} \qquad Jianmin Wang\\
Tsinghua University\\
\texttt{liyike25@mails.tsinghua.edu.cn}\\
\texttt{sxsong@tsinghua.edu.cn} \quad \texttt{jimwang@tsinghua.edu.cn}
}
\date{}

\begin{document}

\maketitle

\begin{abstract}
Time series foundation models (TSFMs) are pretrained on heterogeneous collections containing billions of observations, yet their training windows are typically sampled without estimating whether they provide useful learning signal. 
We introduce a static data-selection framework that scores each window with a reference forecaster and retains an intermediate interval within every source dataset. Specifically, we connect forecasting loss to optimization difficulty by showing
that normalized squared loss controls the per-sample gradient norm under a local Jacobian condition. 
We then define a reference loss score and apply dataset-stratified selection to preserve the diversity of samples.
Across various TSFM architectures, 
our method outperforms random selection by an absolute margin and even 
improves both relative MASE and CRPS over full-data pretraining by retaining fewer candidate pretraining windows.
Further analyses show strong cross-scale and cross-architecture score correlations, indicating that a small reference model can often select data for larger targets, provided that the reference and target share compatible difficulty orderings.

\end{abstract}

\section{Introduction}

Time-series foundation models (TSFMs)  aim to transfer temporal structure learned from large, heterogeneous corpora to unseen forecasting tasks. Recent systems such as TimesFM, Chronos, and Moirai demonstrate that a single pretrained model can forecast across domains, sampling frequencies, context lengths, and prediction horizons without task-specific fitting \citep{das2024timesfm,ansari2024chronos,woo2024moirai}. This universal forecasting paradigm shifts a major part of model development from downstream architecture design to corpus construction. As pretraining collections grow, however, the common assumption that every available window should receive equal attention becomes increasingly costly and scientifically questionable.

Time series samples are constructed by sliding context-future windows over longer sequences as illustrated in Figure~\ref{fig:intro-motivation} and neighboring windows can share part of their observations.
Many windows are simple repetitions of seasonal or slowly varying patterns and add little influence once those regularities have been learned as shown in Figure~\ref{fig:intro-motivation}(a).
At the opposite extreme, a context can be almost constant while its future contains an abrupt, unannounced change. Such a window produces a large loss but supplies no stable relation for a forecaster to learn as in Figure~\ref{fig:intro-motivation}(b). 
Between these extremes are nontrivial yet learnable windows that contain trends, changing seasonality, and moderate stochastic variation, illustrated by Figure~\ref{fig:intro-motivation}(c). 
Therefore, the central question of this paper is how to identify useful training subset from a multi-source corpus that can match or exceed the performance of training on the full dataset.

Loss-based data pruning methods have already been adopted in computer vision and language models.
In computer vision, importance-sampling methods prioritize samples using loss-derived bounds on per-sample gradient magnitude, while EL2N ranks images by their prediction-error norm as a tractable proxy for gradient-based importance \citep{katharopoulos2018notall,paul2021diet}. 
In LLM pretraining, a frozen reference model assigns each document a token-level cross-entropy or perplexity score and these scores are then used to rank and filter the corpus \citep{marion2023less}. 
A forecasting time series window has the same basic structure as a language sentence.
Nonetheless, text perplexity cannot be directly transferred to time series, which have continuous and scale-dependent targets, may contain missing future observations, and are optimized by TSFMs toward heterogeneous objectives such as squared error or quantile loss.
Existing time-series selection methods focus mainly on task-specific training or foundation-model fine-tuning \citep{taga2025adarho,wu2026tsrating}
and are not applicable to large-scale data due to high complexity, leaving data selection for TSFM pretraining largely unexplored.

\begin{figure}[t]
    \centering
    \includegraphics[width=\linewidth]{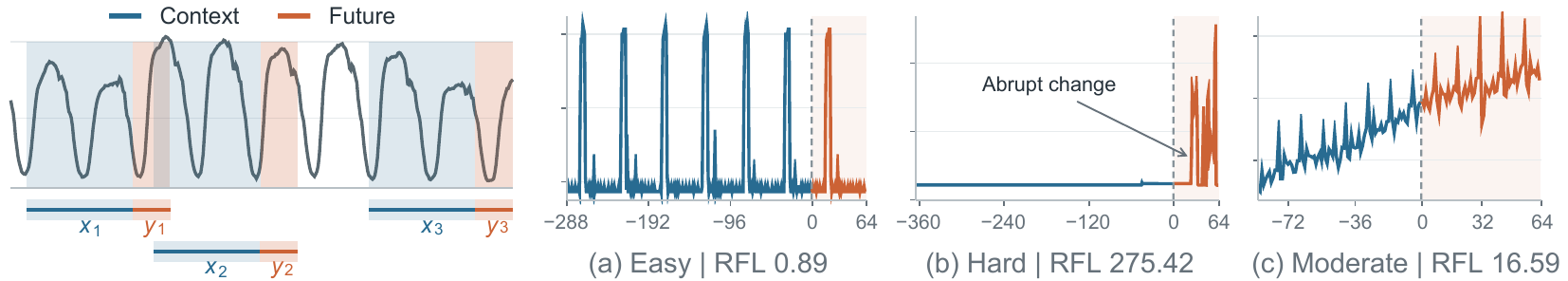}
    \caption{Sliding context-future windows over a long sequence and three real candidate windows. RFL is the reference loss of Chronos-Bolt on each original candidate window. }
    \label{fig:intro-motivation}
\end{figure}

We propose a loss-guided data selection framework for time series foundation model pretraining.
A frozen reference forecaster assigns each candidate context--future window a reference loss score as in Figure~\ref{fig:method-overview} (2).
The sample loss controls gradient norm under a local Jacobian condition, so loss is a principled proxy for how strongly a window can affect an update. 
Then dataset-stratified selection strategy retains an intermediate RFL interval within every source as in Figure~\ref{fig:method-overview} (3).
Specifically,
we convert RFL values into percentile ranks within each source dataset. For a retention ratio $\alpha$, keep the central $\alpha$ fraction of every source. 
It requires one offline reference-model pass, stores only one scalar per window, and does not compute gradients or modify target training online.
In conclusion,
RFL identifies a useful difficulty region, whereas stratification preserves source coverage.
We evaluate our method by training architecturally different TSFMs from scratch and 
achieving state-of-the-art results in the experiments. 

Our contributions are as follows:
\begin{itemize}
    \item To the best of our knowledge, we present the first systematic study of loss-based data selection at the window level for pretraining time series foundation models from scratch. This setting selects context--future windows once before target model optimization. 
    \item We formalize reference loss (RFL) as a difficulty score computed by a frozen forecaster and clarify that normalized MSE controls the gradient magnitude under an explicit local-Jacobian condition. We also introduce dataset-stratified selection, which retains an intermediate RFL interval independently within each source dataset and therefore preserves source coverage and dataset-level diversity.
    \item We train architecturally distinct TSFM families from scratch and 
    our method outperforms random selection by an absolute margin, and even surpasses training on the full sample set.
    Further studies of reference models and selection strategies characterize when reference scores transfer and validate the components of the framework.
\end{itemize}

\section{Related Work}
\subsection{Time-series foundation models}

Time-series foundation models (TSFMs) replace the conventional one-model-per-dataset workflow with a single forecaster pretrained on heterogeneous time series and transferred to unseen tasks. Representative models differ substantially in how they represent continuous observations and parameterize forecasts. 
TimesFM applies a decoder-only Transformer to patched continuous inputs \citep{das2024timesfm}.
Chronos scales and quantizes observations into a discrete vocabulary and trains a language-model backbone with cross-entropy \citep{ansari2024chronos} and Moirai combines masked-encoder pretraining with an any-variate architecture and flexible probabilistic outputs \citep{woo2024moirai}. 
Timer-XL \citep{liu2025timerxl} formulates univariate, multivariate, and covariate-informed forecasting as multivariate next-token prediction over long contexts. 
Time-MoE scales decoder-only forecasting through sparse mixture-of-experts layers and large multi-domain corpora \citep{shi2025timemoe}.
Sundial uses a flow-matching objective to generate flexible continuous predictive distributions without a prespecified parametric family \citep{liu2025sundial}. 
The primary focus of these models is architecture and data scaling and we focus on determining which context--future windows within a heterogeneous corpus should be used for pretraining.

\subsection{Data selection for large language models}

LLM pretraining has motivated data selection at both the document and domain levels. At the document level, large-scale pipelines combine heuristic or learned quality filters with exact and semantic deduplication to remove low-quality and redundant text \citep{albalak2024survey,lee2022deduplicating,abbas2023semdedup}. Model-based methods instead rank documents by reference-model cross-entropy or perplexity \citep{marion2023less}. At the domain level, DoReMi uses proxy-model excess loss to optimize mixture weights \citep{xie2023doremi}, DOGE estimates how training on one domain affects generalization to others \citep{fan2023doge}, and data-mixing laws extrapolate the performance of candidate mixtures from smaller training runs \citep{ye2025mixinglaws}. 
Online approaches further adapt sampling probabilities during training \citep{albalak2023odm,jiang2025ado}.  
Our study focuses on TSFM pretraining and we score context--future windows using reference-model loss.

\subsection{Data selection for time series}

Existing time series selection methods mostly address data valuation, task-specific model training, or foundation-model finetuning rather than pretraining a forecasting TSFM from scratch. 
TimeInf adapts influence functions to temporally dependent blocks and attributes model predictions to individual time points \citep{zhang2025timeinf}. 
TSRating distills pairwise LLM judgments into a cross-domain quality rater and evaluates selected subsets with conventional models and TSFM fine-tuning \citep{wu2026tsrating}.
AdaRho uses the reducible-loss difference between a target and an adaptively updated reference model for online filtering and augmentation on individual forecasting datasets \citep{taga2025adarho}. 
These methods demonstrate the value of time-series-aware selection, but operate in task-specific training or fine-tuning regimes and require repeated adaptation or influence computations.
Consequently, 
they are difficult to apply at TSFM pretraining scale.
To the best of our knowledge, this is the first systematic study of loss-based sample-level selection for pretraining of time series foundation models.

\section{Method}
\label{sec:method}

\subsection{Problem formulation}

Let the pretraining corpus be partitioned into $K$ source datasets,
$\mathcal{D}=\bigcup_{k=1}^{K}\mathcal{D}_k$. 
A sampled forecasting sample is
$z_i=(\vx_i,\vy_i,\vm_i)$, where $\vx_i\in\mathbb{R}^{C}$ is a context of length $C$,
$\vy_i\in\mathbb{R}^{H}$ is the future window and $H$ is the forecast horizon, $\vm_i\in\{0,1\}^{H}$ marks observed
future positions, where zero indicates a missing value. 
Our goal is to choose $\mathcal{S}\subset\mathcal{D}$ with $|\mathcal{S}|\approx
\alpha|\mathcal{D}|$ such that training a target forecaster on $\mathcal{S}$
retains or improves zero-shot performance while reducing the number of distinct
windows that must be stored and processed.

Figure~\ref{fig:method-overview} illustrates the complete framework. First, the reference loss score assigns every candidate window a difficulty value using a frozen forecaster. Second, dataset-stratified selection ranks these scores within each source dataset and constructs the retained pretraining subset while preserving source-level diversity.
\begin{figure}[t]
\centering
\includegraphics[width=\linewidth]{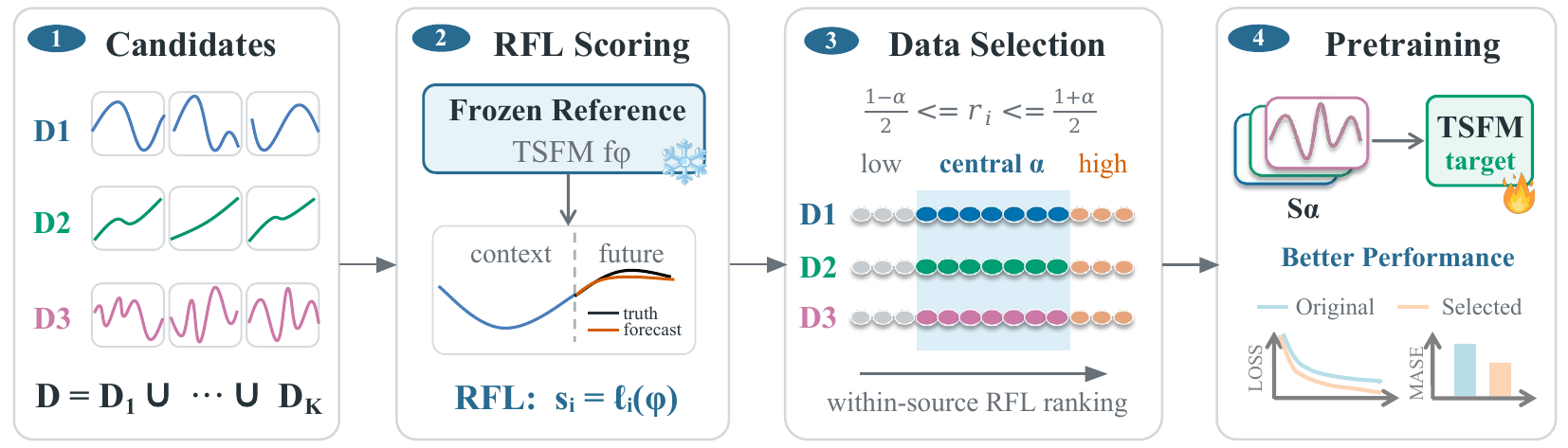}
\caption{Overview of the selection framework. (1) Candidate context--future windows are collected by their source datasets. (2) A frozen reference TSFM assigns each window an RFL score. (3) Scores are ranked within each source, and the central $\alpha$ interval is retained. (4) The selected subset is used to pretrain the target TSFM, 
aiming to improve performance relative to full-data pretraining. }
\label{fig:method-overview}
\end{figure}

\subsection{Reference loss}
\label{sec:rfl}

Time-series forecasting models are commonly trained with point or probabilistic objectives. Let $M_i=\sum_{h=1}^{H}m_{i,h}$ be the number of observed positions in the future window.
For a point forecaster $f_{\vtheta}$, the masked mean-squared error (MSE) is
\begin{equation}
  \mathcal L_i^{\mathrm{MSE}}(\vtheta)
  =\frac{1}{M_i}\sum_{h=1}^{H}m_{i,h}
  \left(f_{\vtheta}(\vx_i)_h-y_{i,h}\right)^2.
  \label{eq:mse-objective}
\end{equation}

MSE measures point accuracy but does not evaluate the full predictive distribution.
For a probabilistic forecaster with predictive cumulative distribution $F_{\vtheta,i,h}$, the continuous ranked probability score (CRPS) evaluates both calibration and sharpness.
Weighted Quantile Loss (WQL) is a common approximation of CRPS and is commonly adopted in probabilistic forecasting \citep{gneiting2007strictly}.
For simplicity, we mainly use the MSE metric for the analysis.

Reference loss connects forecasting objectives to data selection. Let $f_{\bm{\phi}}$ be a frozen reference forecaster and let $\ell_i(\vtheta)$ denote the native masked loss on window $z_i$. We define the reference loss score (RFL) as
\begin{equation}
  s_i^{\mathrm{RFL}}=\ell_i(\bm{\phi}).
  \label{eq:rfl-general}
\end{equation}
RFL records the per-window value of an existing forecasting objective under fixed reference model parameters
and a larger value indicates that the reference model explains the window less well.
Motivated by standard first-order analyses of training-data influence \citep{pruthi2020tracin,xia2024less}, we now formalize what RFL reveals about optimization difficulty. 

Consider the empirical risk of a model $f_{\vtheta}$ over the complete candidate pool $\mathcal D=\{z_j\}_{j=1}^{N}$,  $N=|\mathcal D|$ is the total number of candidate pretraining windows before selection. We have:
\begin{equation}
  R(\vtheta)=\frac{1}{N}\sum_{j=1}^{N}\ell_j(\vtheta),
  \qquad \vg_j=\nabla_{\vtheta}\ell_j(\vtheta),
  \qquad \bar{\vg}=\nabla_{\vtheta}R(\vtheta).
  \label{eq:risk}
\end{equation}

To examine the utility of a single window, consider one SGD update
$\vtheta^{+}=\vtheta-\eta\vg_i$. A first-order Taylor expansion of the
candidate-pool risk gives $R(\vtheta^{+})-R(\vtheta) =-\eta \langle \bar{\vg},\vg_i\rangle +o(\eta)$.
Thus, positive alignment with the candidate-pool gradient makes the leading-order risk change negative, whereas zero or negative alignment provides no first-order decrease. 
This separates two properties that a useful window should possess. Its gradient must be large enough to affect the parameters, but it must also align with the population gradient. 
A corrupted or intrinsically unpredictable future may have a large gradient yet weak or negative utility.

For the MSE objective in Equation~\ref{eq:mse-objective}, 
let $\vr_i=\widetilde{\vy}_i-f_{\vtheta}(\widetilde{\vx}_i)\in\mathbb{R}^{M_i}$ collect residuals only at observed future positions. 
For algebraic convenience, define the half-MSE, which differs from Equation~\ref{eq:mse-objective} by a positive constant and therefore leaves RFL rankings unchanged:
\begin{equation}
  \ell_i^{\mathrm{MSE}}(\vtheta)
  =\frac{1}{2}\mathcal L_i^{\mathrm{MSE}}(\vtheta)
  =\frac{1}{2M_i}\|\vr_i\|_2^2,
  \label{eq:mse-gradient}
\end{equation}
Then the connection between MSE loss and gradient magnitude is explicit. 

\begin{theorem}[MSE controls potential update magnitude]
\label{thm:mse-gradient}
For MSE in Equation~\ref{eq:mse-gradient}, the per-window gradient satisfies
\begin{equation}
  \vg_i=-\frac{1}{M_i}\mJ_i^{\top}\vr_i,
  \qquad
  \|\vg_i\|_2^2
  =\frac{1}{M_i^2}\vr_i^{\top}\mJ_i\mJ_i^{\top}\vr_i.
  \label{eq:grad-exact}
\end{equation}
where $\mJ_i=\partial f_{\vtheta}(\widetilde{\vx}_i)/\partial\vtheta$ is the Jacobian of the observed predictions.
Let $0\leq\lambda_i^{-}\leq\lambda_i^{+}$ be the smallest and largest eigenvalues of the positive-semidefinite matrix $\mJ_i\mJ_i^{\top}$. Then
\begin{equation}
  \frac{2\lambda_i^{-}}{M_i}\ell_i^{\mathrm{MSE}}
  \leq \|\vg_i\|_2^2
  \leq \frac{2\lambda_i^{+}}{M_i}\ell_i^{\mathrm{MSE}}.
  \label{eq:loss-grad-bound}
\end{equation}
\end{theorem}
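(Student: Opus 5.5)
The argument is a direct computation followed by a Rayleigh-quotient bound, and I would carry it out in that order.

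\textbf{Gradient identity.} Write the half-MSE of Equation~\ref{eq:mse-gradient} as $\ell_i^{\mathrm{MSE}}(\vtheta)=\frac{1}{2M_i}\vr_i(\vtheta)^{\top}\vr_i(\vtheta)$ with $\vr_i(\vtheta)=\widetilde{\vy}_i-f_{\vtheta}(\widetilde{\vx}_i)$. Here the tilde denotes restriction to the $M_i$ observed coordinates, so the mask is absorbed and $\vr_i\in\mathbb{R}^{M_i}$. Since $\widetilde{\vy}_i$ does not depend on $\vtheta$, the Jacobian of the residual is $\partial\vr_i/\partial\vtheta=-\mJ_i$. The chain rule then gives
\[
\vg_i=\frac{1}{2M_i}\cdot 2\left(\frac{\partial \vr_i}{\partial\vtheta}\right)^{\!\top}\vr_i=-\frac{1}{M_i}\mJ_i^{\top}\vr_i .
\]
Taking the squared Euclidean norm, $\|\vg_i\|_2^2=\frac{1}{M_i^2}\vr_i^{\top}\mJ_i\mJ_i^{\top}\vr_i$, which is the exact identity in Equation~\ref{eq:grad-exact}. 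I would assume $f_{\vtheta}$ is differentiable at the current $\vtheta$, which is implicit in the statement, and $M_i\ge 1$, so that the loss is defined.

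\textbf{Spectral bound.} The matrix $\mJ_i\mJ_i^{\top}\in\mathbb{R}^{M_i\times M_i}$ is symmetric positive semidefinite, since $\vv^{\top}\mJ_i\mJ_i^{\top}\vv=\|\mJ_i^{\top}\vv\|_2^2\ge 0$. Its eigenvalues are therefore real and nonnegative. The Rayleigh--Ritz inequality gives $\lambda_i^{-}\|\vr_i\|_2^2\le \vr_i^{\top}\mJ_i\mJ_i^{\top}\vr_i\le\lambda_i^{+}\|\vr_i\|_2^2$; I would justify it by diagonalizing in an orthonormal eigenbasis. Substituting $\|\vr_i\|_2^2=2M_i\,\ell_i^{\mathrm{MSE}}$ and dividing by $M_i^2$ yields
\[
\frac{2\lambda_i^{-}}{M_i}\ell_i^{\mathrm{MSE}}\le\|\vg_i\|_2^2\le\frac{2\lambda_i^{+}}{M_i}\ell_i^{\mathrm{MSE}},
\]
which is Equation~\ref{eq:loss-grad-bound}.

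\textbf{Main obstacle.} There is no real mathematical difficulty; the care is in the bookkeeping. First, the masked loss must be consistently rewritten over the observed coordinates only: restricting $f_{\vtheta}$ and $\vy_i$ to observed positions and defining $\mJ_i$ as the Jacobian of exactly these outputs makes the $1/M_i$ normalization and the dimension of $\mJ_i\mJ_i^{\top}$ match. Second, the sign convention from $\vr_i=\widetilde{\vy}_i-f$ must be tracked. I would also remark that the lower bound is informative only when $\lambda_i^{-}>0$. That requires $\mJ_i$ to have full row rank $M_i$, which is plausible for overparameterized networks but is exactly the ``local Jacobian condition'' the paper invokes.
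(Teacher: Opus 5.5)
Your proposal is correct and matches the paper's proof essentially step for step: the chain rule with $\partial\vr_i/\partial\vtheta=-\mJ_i$ gives the gradient identity, $\mJ_i\mJ_i^{\top}$ is positive semidefinite, Rayleigh--Ritz bounds the quadratic form, and substituting $\|\vr_i\|_2^2=2M_i\ell_i^{\mathrm{MSE}}$ and dividing by $M_i^2$ gives the bound. Your remarks on absorbing the mask into the observed coordinates, and on the lower bound needing $\lambda_i^{-}>0$, are sensible clarifications that the paper leaves implicit.
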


The proof is provided in Appendix~\ref{app:proof-mse-gradient}.
Theorem~\ref{thm:mse-gradient} states the precise sense in which normalized MSE is an optimization-difficulty score. If $M_i$ and local Jacobian conditioning are comparable across windows, ranking their losses approximately ranks the magnitudes of the updates they can induce. For a nonnegative $\beta$-smooth objective, the standard self-bounding inequality $\|\nabla\ell_i\|_2^2\leq2\beta\ell_i$ provides the one-sided result that very low loss rules out a large gradient \citep{srebro2010smoothness}. Evaluating these relations at the frozen reference parameters $\bm\phi$ motivates using $s_i^{\mathrm{RFL}}$ as a cheap difficulty coordinate.

\subsection{Dataset-stratified selection}
\label{sec:stratified}

Dataset-stratified selection applies RFL-based difficulty filtering while preserving the diversity of the pretraining corpus.
We define the candidate corpus as $\mathcal D=\{z_i\}_{i=1}^{N}=\bigcup_{k=1}^{K}\mathcal{D}_k$, where $K$ is the number of source datasets, $\mathcal D_k$ is the set of candidate windows from source $k$, and $n_k=|\mathcal D_k|$. Moreover, each window $z_i$ has an RFL score $s_i^{\mathrm{RFL}}$ and a source label $g_i\in\{1,\ldots,K\}$, such that $z_i\in\mathcal D_{g_i}$.

The preceding first-order Taylor expansion analysis motivates retaining an intermediate RFL interval.
Low-RFL windows are expected to provide limited update magnitude, whereas the high-RFL windows are more likely to contain a mixture of rare useful patterns, distribution shifts, and unlearnable noise. 
By contrast, intermediate RFL windows retain nontrivial residuals while remaining close enough to learned temporal structure that their gradients may align with recurring patterns.
This interpretation is consistent with empirical findings on moderate-score and intermediate-perplexity data selection in LLM and CV \citep{xia2023moderate,marion2023less}.

Therefore, we first consider \emph{global selection}. It implements this conclusion by pooling all windows, ranking them by absolute RFL, and retaining the central $\alpha$ fraction. Its corpus-level empirical cumulative distribution and selected subset are
\begin{equation}
  \widehat F_{\mathrm{all}}(t)
  =\frac{1}{N}\sum_{j=1}^{N}
  \mathbb{I}[s_j^{\mathrm{RFL}}\leq t],
  \qquad
  \mathcal{S}_{\alpha}^{\mathrm{global}}
  =\left\{z_i\in\mathcal D:\frac{1-\alpha}{2}
  \leq \widehat F_{\mathrm{all}}(s_i^{\mathrm{RFL}})
  \leq\frac{1+\alpha}{2}\right\}.
  \label{eq:global-selection}
\end{equation}
Here, $t$ is an RFL threshold and $\mathbb{I}[\cdot]$ is the indicator function. The desired retention ratio $\alpha\in(0,1]$ specifies the fraction of the complete corpus to retain, and $\mathcal S_{\alpha}^{\mathrm{global}}$ contains the windows between the $(1-\alpha)/2$ and $(1+\alpha)/2$ quantiles of the pooled distribution. 

Global selection controls $|\mathcal S_{\alpha}^{\mathrm{global}}|\approx\alpha N$, but it does not control the source-specific retention rate. 
Absolute RFL distributions can differ across datasets because of their noise levels, temporal patterns, and intrinsic predictability.
Consequently, a source whose scores concentrate near either global tail can be severely underrepresented or removed,
which may harm dataset diversity.

Dataset-stratified selection instead measures the difficulty of each window relative to other windows from the same source. For source $k$, we define the source-specific empirical cumulative distribution $\widehat F_k$ and the within-source percentile rank $r_i$ as
\begin{equation}
  \widehat F_k(t)
  =\frac{1}{n_k}\sum_{j:g_j=k}
  \mathbb{I}[s_j^{\mathrm{RFL}}\leq t],
  \qquad
  r_i=\widehat F_{g_i}(s_i^{\mathrm{RFL}}).
  \label{eq:percentile}
\end{equation}

Here, $r_i\in(0,1]$ is the fraction of windows in $\mathcal D_{g_i}$ whose RFL score does not exceed that of window $i$. We then retain the central $\alpha$ fraction independently within every source:
\begin{equation}
  \mathcal{S}_{\alpha}\equiv\mathcal{S}_{\alpha}^{\mathrm{strat}}
  =\left\{z_i\in\mathcal D:\frac{1-\alpha}{2}\leq r_i
  \leq\frac{1+\alpha}{2}\right\}.
  \label{eq:stratified-selection}
\end{equation}

Equation~\ref{eq:stratified-selection} retains $\alpha n_k$ windows from every source $k$. The proportion of source $k$ in the selected subset is therefore
\begin{equation}
  p_k^{\mathcal S}
  =\frac{|\mathcal S_{\alpha}\cap\mathcal D_k|}{|\mathcal S_{\alpha}|}
  =\frac{\alpha n_k}{\alpha N}
  =\frac{n_k}{N}=p_k,
  \label{eq:mixture-preservation}
\end{equation}
where $p_k=n_k/N$ is the proportion of source $k$ in the original candidate corpus. Dataset-stratified selection thus filters windows by relative difficulty while preserving source coverage and dataset-level diversity. 
We analyze the benefit of the framework in Section~\ref{sec: effect4strat}.

\subsection{Algorithm and computational cost}

Algorithm~\ref{alg:rfl-selection} summarizes the two components of the framework. We first assign every candidate window an RFL score with a frozen reference model. We then rank the scores separately within each source dataset and retain the central interval. 
The reference and target architectures may differ, which permits an inexpensive proxy to amortize selection over several target models.

The offline algorithm requires one reference-model forward pass per candidate window.
It additionally requires $\sum_{k=1}^{K}O(n_k\log n_k)$ time to sort the scores within sources and stores one scalar per window. The generated scores can be reused across target-model sizes and architectures, amortizing the scoring cost.

\begin{algorithm}[t]
\caption{Reference-loss scoring and dataset-stratified selection}
\label{alg:rfl-selection}
\begin{algorithmic}[1]
\REQUIRE Candidate corpus $\mathcal D=\bigcup_{k=1}^{K}\mathcal D_k$; frozen reference model $f_{\bm\phi}$; retention ratio $\alpha$

\ENSURE Selected subset $\mathcal S_{\alpha}$ and trained target parameters $\vtheta$
\STATE $\mathcal S_{\alpha}\leftarrow\varnothing$
\FOR{each candidate window $z_i\in\mathcal D$}
  \STATE Compute its RFL score $s_i^{\mathrm{RFL}}\leftarrow\ell_i(\bm\phi)$
\ENDFOR
\FOR{each source dataset $k\in\{1,\ldots,K\}$}
  \STATE Sort the $n_k$ windows by increasing RFL score
  \STATE Denote the resulting order by $z_{\pi_k(1)},\ldots,z_{\pi_k(n_k)}$
  \STATE $m_k\leftarrow\max\{1,\operatorname{round}(\alpha n_k)\}$
  \STATE $a_k\leftarrow\lfloor(n_k-m_k)/2\rfloor+1$
  \STATE $\mathcal S_{\alpha}\leftarrow\mathcal S_{\alpha}\cup\{z_{\pi_k(j)}:a_k\leq j<a_k+m_k\}$
\ENDFOR
\STATE Train the randomly initialized target model $f_{\vtheta}$ on $\mathcal S_{\alpha}$
\RETURN $\mathcal S_{\alpha},\vtheta$
\end{algorithmic}
\end{algorithm}

\section{Experiments}
\label{sec:experiments}

We organize the experiments around three questions.
(Q1) can dataset-stratified RFL selection improve full-data pretraining while retaining fewer training samples? 
(Q2) can RFL scores be transferred across reference-model scales and architectures?
(Q3) how important are the intermediate interval and dataset stratification?

\subsection{Experimental setup}
\label{exp:setup}

\paragraph{Data.}

The pretraining data mainly comes from the Chronos data collection \citep{ansari2024chronos} and LOTSA \citep{woo2024moirai}.
It contains more than 100 source datasets and 4.64M univariate series,
spanning climate and weather, transportation, health and mobility data, industrial measurements, synthetic series generated via
Gaussian processes\citep{ansari2024chronos} and so on. 
We build validation windows from the last complete horizon of each series and keep them separate from the training windows.  
All architectural variants use the same validation samples to ensure comparable validation losses.

\paragraph{Training.}
We pretrain three representative TSFM families: Chronos-Bolt Base \citep{ansari2024chronos} (encoder-decoder), Moirai-Base \citep{woo2024moirai} (encoder), and TimesFM 2.5 \citep{das2024timesfm} (decoder). 
All models use AdamW, cosine learning-rate decay, gradient clipping, and an effective batch size of 512 on NVIDIA A100 GPUs. The initial learning rate is $3\times10^{-4}$ for TimesFM and $1\times10^{-3}$ for the other models, with 5,000 and 2,000 warm-up steps, respectively. We set the training budget to 40,000 optimizer steps for TimesFM and 100,000 steps for others. 
The max context length is set to 2,048 and prediction length is architecture-specific: 64 for Chronos-Bolt, 96 for Moirai, and 128 for TimesFM.

\paragraph{Evaluation.}
We use GIFT-Eval \citep{aksu2024gifteval} and remove training-overlapping benchmarks. The current evaluation contains 91 task configurations.   
Following GIFT-Eval, we normalize MASE and CRPS by a seasonal-naive forecaster and report the geometric mean across tasks. 
We also report MSE and MAE on five commonly used benchmarks in ablation study:  ETTh1, ETTh2, ETTm1, ETTm2, and weather.
These datasets have been utilized for benchmarking and publicly available on \citep{wu2021autoformer}.
None of these datasets is included in pretraining datasets.

\subsection{Overall Comparison}
\label{sec:main-results}

We compare dataset-stratified RFL selection with uniform random selection and full data pretraining under the same computational budget.
To test whether the selection rule generalizes beyond model design, we pretrain three distinct target architectures and score their candidate pools using frozen references from the corresponding architecture. 
Table~\ref{tab:main} shows that our method outperforms random selection by an absolute margin, indicating that RFL provides a useful selection signal across architectures.  
Since the random control retains the same number of windows, this consistency attributes the gain to which windows are retained.
At 80\% retention, our method improves over both full-data pretraining and a size-matched random subset on all target models. 
This is because we remove high-loss noise and harmful data and prune unnecessary low-loss data while maintaining diversity.

\begin{table*}[t]
\caption{GIFT-Eval comparison at different retention ratios. Lower MASE and CRPS are better. Full pretraining uses 100\% of the candidate windows. 
Bold denotes the best selection method.}
\label{tab:main}
\centering
\small
\setlength{\tabcolsep}{6pt}
\begin{tabular}{llc@{\hspace{12pt}}cc@{\hspace{12pt}}cc@{\hspace{12pt}}cc}
\toprule
\multirow{2}{*}{Target model} & \multirow{2}{*}{Metric $\downarrow$} & \multirow{2}{*}{Full (100\%)}
& \multicolumn{2}{c}{40\% retention}
& \multicolumn{2}{c}{60\% retention}
& \multicolumn{2}{c}{80\% retention} \\
\cmidrule(lr){4-5}\cmidrule(lr){6-7}\cmidrule(l){8-9}
& & & Random & Ours & Random & Ours & Random & Ours \\
\midrule
\multirow{2}{*}{Chronos-Bolt Base}
& MASE & 0.796 & 0.836 & 0.803 & 0.806 & 0.801 & 0.799 & \textbf{0.777} \\
& CRPS & 0.555 & 0.576 & 0.569 & 0.555 & 0.554 & 0.549 & \textbf{0.542} \\
\midrule
\multirow{2}{*}{Moirai-Base}
& MASE & 1.017 & 1.026 & 1.050 & 1.061 & 1.031 & 1.022 & \textbf{1.005} \\
& CRPS & 0.698 & 0.708 & 0.749 & 0.732 & 0.731 & 0.701 & \textbf{0.680} \\
\midrule
\multirow{2}{*}{TimesFM 2.5}
& MASE & 0.813 & 0.833 & 0.821 & 0.840 & 0.823 & 0.835 & \textbf{0.804} \\
& CRPS & 0.558 & 0.573 & 0.561 & 0.574 & 0.560 & 0.576 & \textbf{0.554} \\
\bottomrule
\end{tabular}
\end{table*}

\subsection{Reference-model analysis}
\label{sec:reference}

We investigate how the choice of reference model affects data selection along two dimensions: scale transfer within an architecture and transfer across architectures. 
We use Chronos-Bolt Base and Moirai-Base as target models and retain 80\% of the candidate windows. For within-architecture transfer, we vary the frozen reference among Chronos-Bolt Mini and Small, or between Moirai-Small and Base. For cross-architecture transfer, a frozen TimesFM reference selects data for each target. This design tests whether RFL requires a capacity-matched reference or can instead reuse a smaller or architecturally different pretrained forecaster.

Within-architecture references are largely robust to model scale. As shown in Table~\ref{tab:reference}, all subset configurations improve MASE and CRPS over full-data pretraining, indicating small reference models produce competitive training subsets.
The score analysis in Appendix~\ref{app:reference-analysis} explains this robustness. 
In brief, the pairwise Spearman correlations among Chronos-Bolt families range from $0.977$ to $0.986$, and Moirai families obtain $\rho=0.991$. 
This agreement indicates that models from the same architecture share a stable ordering of window difficulty. Consequently, a smaller pretrained reference can often approximate the ranking of a larger model and reduce the cost of the required forward scoring pass.

Cross-architecture transfer is conditional. A TimesFM reference selects a useful subset for Chronos-Bolt Base target, improving full-data pretraining and remaining competitive with the best reference. 
In contrast, the TimesFM-selected subset
performs poorly in Moirai-Base pretraining, degrading to the level of random selection in Table~\ref{tab:main}.
Indeed, TimesFM has strong RFL score rank agreement with Chronos-Bolt Mini on the candidate pool ($\rho=0.828$), but only weak agreement with Moirai-Small ($\rho=0.254$).
For more analysis details, please see Appendix~\ref{app:reference-analysis}.
A plausible explanation is that TimesFM scores point and quantile errors, whereas Moirai uses a distributional negative log-likelihood, so they may not assign similar difficulty to high-uncertainty windows.

\begin{table}[t]
\caption{Evaluation results of cross-scale and cross-architecture reference models with 80\% retention. Bold and underlined values are best and second best for each target model.
}
\label{tab:reference}
\centering
\small
\setlength{\tabcolsep}{6pt}
\begin{tabular}{llccc}
\toprule
Target Model & Reference Model & Retained & MASE & CRPS \\
\midrule
\multirow{4}{*}{Chronos-Bolt Base}
& Chronos-Bolt Mini & 80\% & \textbf{0.777} & \underline{0.542} \\
& Chronos-Bolt Small & 80\% & \underline{0.786} & 0.545 \\

& TimesFM & 80\% & \underline{0.786} & \textbf{0.541} \\
& --- (Full data) & 100\% & 0.796 & 0.555 \\
\midrule
\multirow{3}{*}{Moirai-Base}
& Moirai-Small & 80\% & \textbf{1.003} & \underline{0.695} \\
& Moirai-Base & 80\% & \underline{1.005} & \textbf{0.680} \\
& TimesFM & 80\% & 1.022 & 0.708 \\
& --- (Full data) & 100\% & 1.017 & 0.698 \\
\bottomrule
\end{tabular}
\end{table}

\subsection{Ablation studies}
\label{sec:ablations}

\subsubsection{RFL region}
\label{sec:regions}

Dataset-stratified selection retains the central RFL interval by default. To isolate the effect of this choice, we compare three score-based subsets constructed independently within every source.
Bottom retains the lowest-RFL tail, Top retains the highest-RFL tail, and Central retains the central interval. 
We additionally include a dataset-stratified random baseline that samples windows uniformly at random within each source, retaining the prescribed fraction from every source.
All runs in this ablation use TimesFM as both the reference and target model. We evaluate 40\% and 80\% retention on five forecasting benchmarks in Table~\ref{tab:rfl-regions} and compare matched validation-loss trajectories through 40k optimization steps against source-matched random selection and full-data pretraining in Figure~\ref{fig:region-val-curves}.

Table~\ref{tab:rfl-regions} demonstrates the advantages of the Central selection,
consistent with the optimization interpretation in Section~\ref{sec:rfl}.
Bottom retains low-RFL windows that the reference model already predicts well and are therefore relatively easy. Concentrating on this region can overallocate training capacity to redundant patterns with little residual learning signal. One possible explanation for this redundancy is that intermediate-difficulty windows contain the regularities needed for the easier cases. Once the target model learns these richer patterns, it can readily generalize to low-difficulty windows without repeatedly training on them. 
By contrast, Top retains high-difficulty windows. Although some of these windows may contain rare informative structure, a high loss can also arise from noise, abrupt unpredictable transitions, or distribution shifts, making the entire upper tail less reliably learnable. 
Central  avoids both extremes and retains windows with nontrivial prediction error but sufficient regularity to provide stable learning signal.

The validation trajectories in Figure~\ref{fig:region-val-curves}(a,b) reinforce this interpretation. Central remains among the lowest-loss trajectories over much of training and separates most clearly when the retained budget is small. 
This suggests that the benefit comes from excluding both simple windows and the most unstable tail, not from preferring uniformly easier or harder data.

\begin{table*}[t]
\caption{RFL-region ablation at 40\% and 80\% retention. During downstream evaluation, each dataset uses a context window of 512 and a prediction window of 96. 
}
\label{tab:rfl-regions}
\centering
\small
\setlength{\tabcolsep}{6pt}
\begin{tabular}{@{}cl*{5}{cc}@{}}
\toprule
\multirow{2}{*}{Ratio} & \multirow{2}{*}{Region}
& \multicolumn{2}{c}{ETTh1} & \multicolumn{2}{c}{ETTh2}
& \multicolumn{2}{c}{ETTm1} & \multicolumn{2}{c}{ETTm2}
& \multicolumn{2}{c}{Weather} \\
\cmidrule(lr){3-4}\cmidrule(lr){5-6}\cmidrule(lr){7-8}
\cmidrule(lr){9-10}\cmidrule(lr){11-12}
& & MSE & MAE & MSE & MAE & MSE & MAE & MSE & MAE & MSE & MAE \\
\midrule
\multirow{4}{*}{40\%}
& Bottom
& 0.404 & 0.393 & 0.307 & 0.342
& 0.351 & 0.359 & 0.192 & 0.260
& 0.182 & 0.214 \\
& Top
& 0.394 & 0.411 & 0.315 & 0.358
& 0.357 & 0.380 & 0.197 & 0.273
& 0.180 & 0.221 \\
& Random
& 0.388 & 0.398 & \textbf{0.302} & 0.348 
& 0.347 & 0.365 & \textbf{0.182} & 0.259
& 0.176 & 0.212 \\
& Ours
& \textbf{0.383} & \textbf{0.390} & \textbf{0.302} & \textbf{0.337}
& \textbf{0.341} & \textbf{0.357} & 0.186 & \textbf{0.258}
& \textbf{0.169} & \textbf{0.202} \\
\midrule
\multirow{4}{*}{80\%}
& Bottom
& 0.386 & 0.391 & 0.300 & 0.337
& 0.348 & \textbf{0.357} & 0.184 & \textbf{0.255}
& 0.172 & 0.206 \\
& Top
& 0.391 & 0.400 & 0.305 & 0.349
& 0.346 & 0.366 & 0.182 & 0.259
& 0.171 & 0.209 \\
& Random
& 0.379 & 0.390 & 0.294 & 0.347 
& 0.345 & 0.367 & 0.189 & 0.262
& 0.173 & 0.210 \\
& Ours
& \textbf{0.373} & \textbf{0.388} & \textbf{0.287} & \textbf{0.334}
& \textbf{0.339} & 0.359 & \textbf{0.181} & \textbf{0.255}
& \textbf{0.170} & \textbf{0.204} \\
\midrule
100\% & Full & 0.397   & 0.398  & 0.297  &  0.342
&  0.350 & 0.363  &  0.187 &  0.261
&  0.175 & 0.210  \\
\bottomrule
\end{tabular}
\end{table*}

\begin{figure*}[t]
\centering
\includegraphics[width=\textwidth]{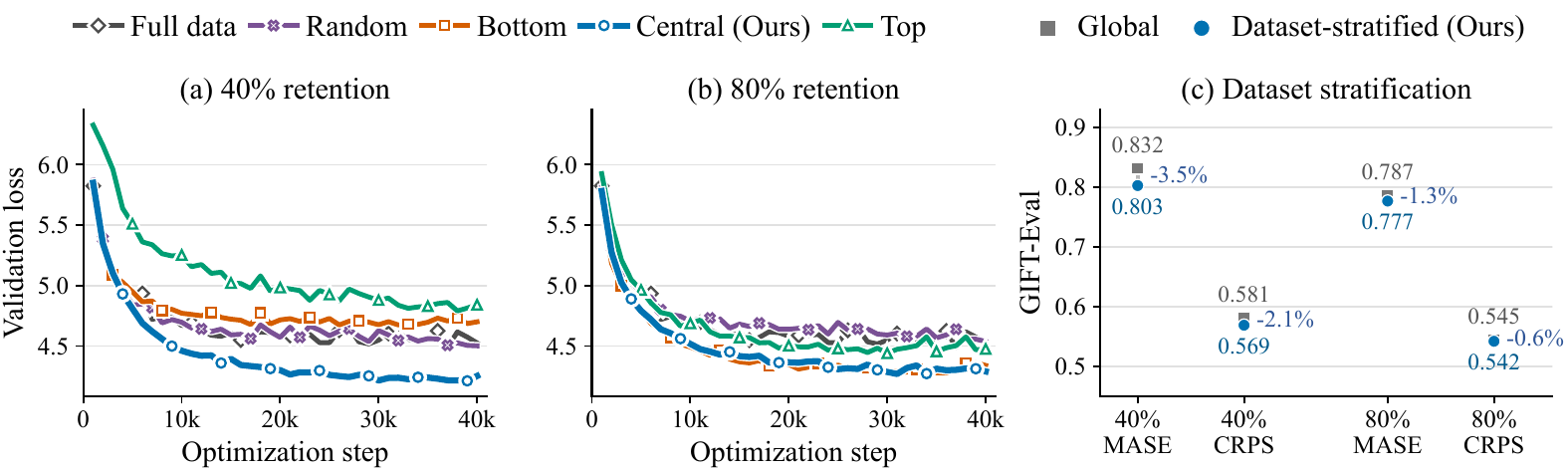}
\caption{Ablations of the two components of our framework. (a,b) Validation-loss trajectories for full-data pretraining, random selection and the Bottom, Central (Ours), and Top RFL regions at 40\% and 80\% retention. (c) GIFT-Eval results for dataset-stratified selection and global selection at  40\% and 80\% retention.
}
\label{fig:region-val-curves}
\end{figure*}

\subsubsection{Dataset stratification selection}
\label{sec: effect4strat}

Dataset stratification is the diversity-preserving component of our framework. We compare the proposed dataset-stratified rule in Equation~\ref{eq:stratified-selection} with global selection in Equation~\ref{eq:global-selection}. Both rules retain the same central RFL interval and use the same total retention ratio. They differ only in whether percentile ranks are computed within each source or over the pooled corpus. 
We use the default reference model to score the candidate pool and pretrain a Chronos-Bolt Base target model on the selected subsets. Figure~\ref{fig:region-val-curves}(c) shows that the stratified rule improves GIFT-Eval metrics at all retention ratios. 
This improvement indicates that global percentiles harm the data diversity,
whereas within-source percentiles preserve coverage. The persistence of the advantage from 40\% to 80\% supports dataset stratification as an effective diversity constraint.

\section{Conclusion}

This paper presents a reference-loss selection framework for selecting training windows for time-series foundation-model pretraining.  
We introduce reference loss as a model-compatible difficulty score computed by a frozen forecaster. 
Our analysis connects loss to potential update magnitude while clarifying that actual one-step utility also depends on gradient alignment. 
Based on this, our dataset-stratified selection retains an intermediate interval within every source dataset. 
The resulting subset removes low-loss windows with limited residual learning signal and avoids overconcentrating on the unstable high-loss tail, while preserving source coverage and dataset-level diversity.
Across three distinct TSFM families, our method outperforms size-matched random selection by an absolute margin, and even surpasses the training on the full sample set.
Together, these findings establish pretraining-data selection as an important design axis alongside model architecture and corpus scale for time series foundation models.

\clearpage

\bibliography{main}
\bibliographystyle{plainnat}

\appendix

\section{Proof of Theorem~\ref{thm:mse-gradient}}
\label{app:proof-mse-gradient}

\begin{proof}
Recall that $\vr_i=\widetilde{\vy}_i-f_{\vtheta}(\widetilde{\vx}_i)$ contains the $M_i$ residuals at observed future positions and that
\begin{equation}
  \ell_i^{\mathrm{MSE}}(\vtheta)
  =\frac{1}{2M_i}\vr_i^{\top}\vr_i.
  \label{eq:appendix-half-mse}
\end{equation}
Because $\partial\vr_i/\partial\vtheta=-\mJ_i$, the chain rule gives
\begin{equation}
  \vg_i
  =\nabla_{\vtheta}\ell_i^{\mathrm{MSE}}(\vtheta)
  =\frac{1}{M_i}
    \left(\frac{\partial\vr_i}{\partial\vtheta}\right)^{\!\top}\vr_i
  =-\frac{1}{M_i}\mJ_i^{\top}\vr_i.
  \label{eq:appendix-mse-gradient}
\end{equation}
Taking its squared Euclidean norm yields
\begin{equation}
  \|\vg_i\|_2^2
  =\frac{1}{M_i^2}
    (\mJ_i^{\top}\vr_i)^{\top}(\mJ_i^{\top}\vr_i)
  =\frac{1}{M_i^2}
    \vr_i^{\top}\mJ_i\mJ_i^{\top}\vr_i,
\end{equation}
which proves Equation~\ref{eq:grad-exact}.

Let $\mA_i=\mJ_i\mJ_i^{\top}$. This matrix is positive semidefinite because, for every vector $\mathbf u$,
$\mathbf u^{\top}\mA_i\mathbf u=\|\mJ_i^{\top}\mathbf u\|_2^2\geq0$.
Applying the Rayleigh--Ritz inequality to $\mA_i$ gives
\begin{equation}
  \lambda_i^{-}\|\vr_i\|_2^2
  \leq \vr_i^{\top}\mA_i\vr_i
  \leq \lambda_i^{+}\|\vr_i\|_2^2.
  \label{eq:appendix-rayleigh}
\end{equation}
Finally, Equation~\ref{eq:appendix-half-mse} implies
$\|\vr_i\|_2^2=2M_i\ell_i^{\mathrm{MSE}}$.
Substituting this identity into Equation~\ref{eq:appendix-rayleigh} and dividing by $M_i^2$ yields
\begin{equation}
  \frac{2\lambda_i^{-}}{M_i}\ell_i^{\mathrm{MSE}}
  \leq \|\vg_i\|_2^2
  \leq \frac{2\lambda_i^{+}}{M_i}\ell_i^{\mathrm{MSE}},
\end{equation}
which is Equation~\ref{eq:loss-grad-bound}.
\end{proof}

\section{Additional reference-model analysis}
\label{app:reference-analysis}

Within-architecture RFL rankings are largely insensitive to reference scale. Figure~\ref{fig:reference-correlation}(a) compares Chronos-Bolt Mini, Small, and Base on the same candidate pool. Their pairwise Spearman correlations remain high even though the models differ in capacity. 
Moreover, we also compared the RFL scores computed with Moirai-base and Moirai-small as reference models, respectively, and the Spearman correlation between them is 0.991.
This near-invariance indicates that the same architecture shares a stable notion of relative window difficulty. 
This agreement helps explain why the smaller reference models in Table~\ref{tab:reference} produce competitive training subsets.

Cross-architecture reuse depends on whether the reference and target families induce compatible difficulty orderings. On the Chronos-Bolt-Base candidate pool, TimesFM and Chronos-Bolt Mini retain strong window-level rank agreement ($\rho=0.828$ in Figure~\ref{fig:reference-correlation}(b)). Because dataset-stratified selection depends on ranks rather than absolute loss magnitudes, this monotonic agreement is sufficient for TimesFM to identify a useful Chronos subset, consistent with the successful Chronos transfer summarized in Section~\ref{sec:reference}.

The Moirai pool illustrates a limitation of this transfer. TimesFM and Moirai-Small obtain only weak rank agreement ($\rho=0.254$ in Figure~\ref{fig:reference-correlation}(c)), and the TimesFM-selected subset underperforms both the Moirai-Small subset and full-data pretraining. A plausible explanation is that TimesFM scores point and quantile errors, whereas Moirai uses a distributional negative log-likelihood, so the two references need not assign the same difficulty to high-uncertainty windows.

\begin{figure*}[t]
\centering
\includegraphics[width=\textwidth]{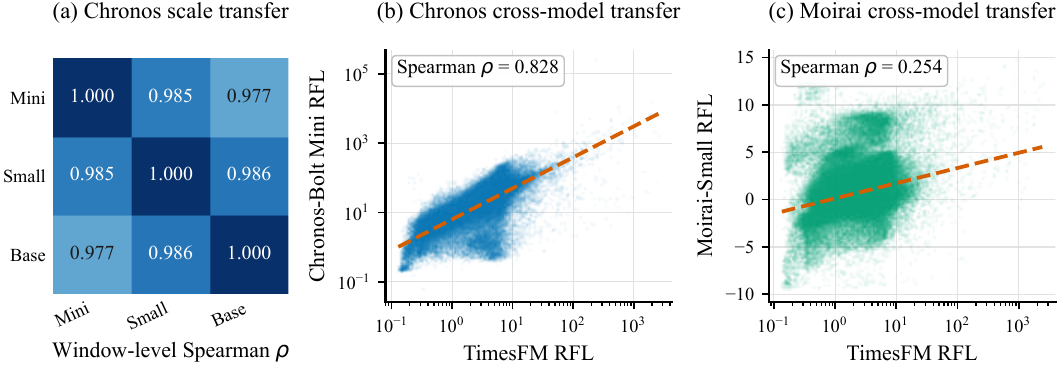}
\caption{Reference-model agreement on target-specific candidate pools. (a) Window-level Spearman correlations among Chronos-Bolt Mini, Small, and Base over all candidate windows for the Chronos-Bolt-Base target. (b) TimesFM versus Chronos-Bolt Mini on the same Chronos target pool. (c) TimesFM versus Moirai-Small on the Moirai-Base target pool. Spearman coefficients use global ranks over every aligned window. For legibility, each scatter panel displays a deterministic sample of 150k windows, while the reported correlation and dashed ordinary-least-squares fit use all windows.}
\label{fig:reference-correlation}
\end{figure*}

\begin{figure}[ht]
\centering
\includegraphics[width=\linewidth]{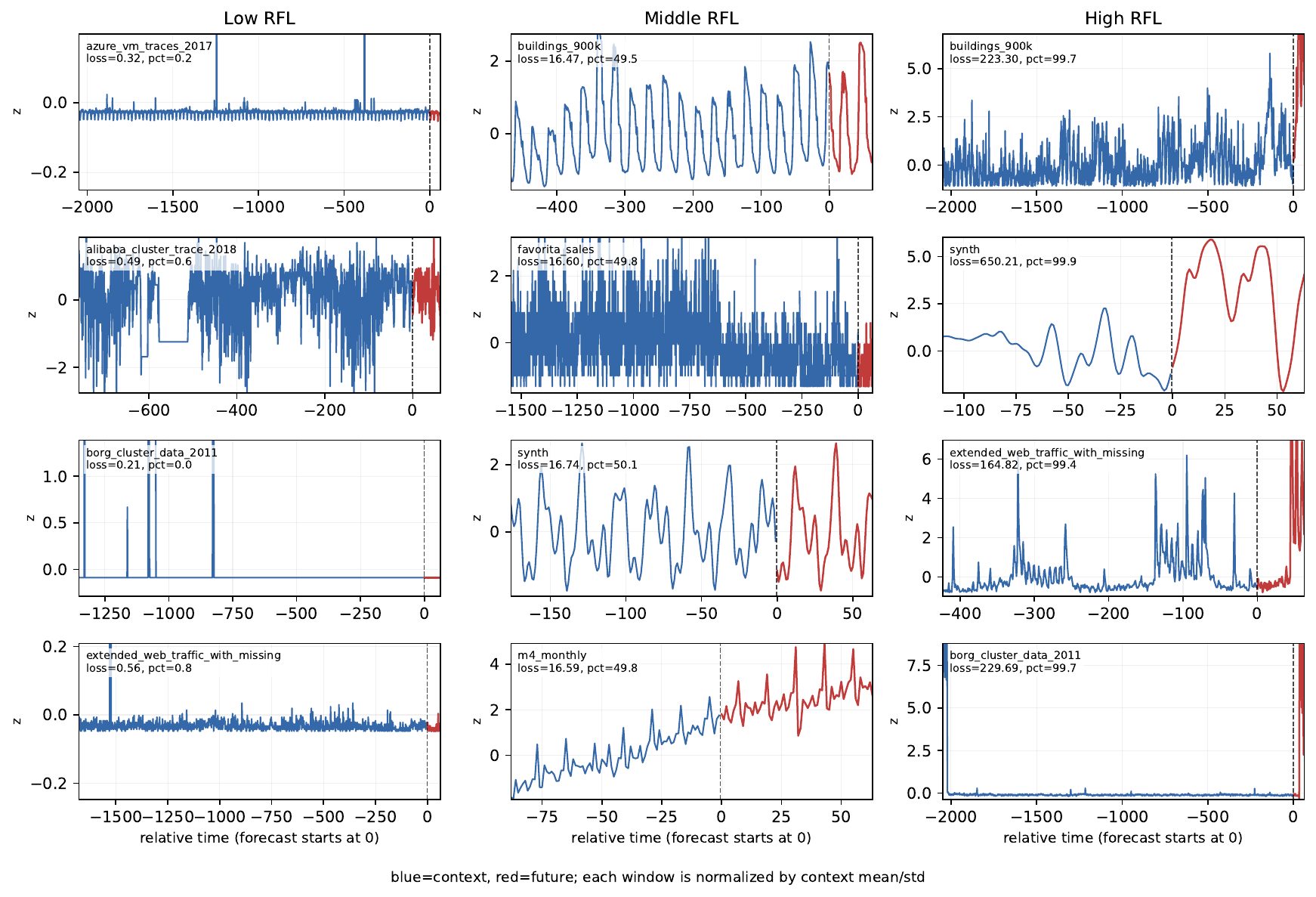}
\caption{Representative windows from low, middle, and high Chronos-Bolt Mini RFL regions. Blue denotes the context and red denotes the forecast horizon. 
Each window is standardized by its valid context mean and standard deviation. 
The \texttt{pct} value in each panel is the global empirical percentile rank of that window's RFL score among all candidate windows.
Low-RFL windows are often highly regular, middle-RFL windows retain learnable nontrivial structure, and high-RFL windows often contain abrupt shifts, extreme future values, or context--future mismatch.}
\label{fig:rfl-qualitative-examples}
\end{figure}

\section{Qualitative analysis}
\label{app:rfl-qualitative}

We qualitatively inspect the Chronos-Bolt Mini RFL scores used for dataset-stratified selection. The score file contains all candidate training windows from the original sample cache. For visualization only, we form three global score regions: low RFL (bottom 1\%), middle RFL (49.5--50.5\%), and high RFL (top 1\%). Selection in the main experiments remains dataset-stratified and the global bands here are used only to make the score semantics visually interpretable.

Figure~\ref{fig:rfl-qualitative-examples} supports the intended interpretation of RFL as a difficulty coordinate. Low-score samples tend to have small residual signal after context normalization. Middle-score samples include visible trend, seasonality, or amplitude changes but remain connected to the context. High-score samples frequently show future behavior that is hard to extrapolate from the context alone, such as sudden jumps or large tail events. This motivate removing both extremes when constructing a compact pretraining subset: low-score windows can be redundant, while the high-score tail mixes rare events with weakly learnable windows.

\end{document}